\documentclass[conference]{IEEEtran}
 \IEEEoverridecommandlockouts

\usepackage{microtype}
\usepackage{graphicx}
\usepackage{subfigure}
\usepackage{booktabs} 
\usepackage{hyperref}
\usepackage{url}            
 
\usepackage{amsfonts}       
\usepackage{nicefrac}       

\usepackage{graphicx} 
\graphicspath{{figures/}}
\usepackage{epsfig} 
\usepackage{ragged2e}
\usepackage{amsmath}
\usepackage{amsthm}
\usepackage{amssymb}
\usepackage{epstopdf}
\usepackage{algorithm}
\usepackage{algorithmic}
\usepackage{multirow}
\usepackage{rotating}
\usepackage{subfigure}
\usepackage{color}
\usepackage{mysymbol}
\usepackage{url}

\newtheorem{theorem}{Theorem}[section]

\newtheorem{prop}[theorem]{Proposition}

\newtheorem{pr}{Problem}
\newtheorem{rem}[theorem]{Remark}

\def\BibTeX{{\rm B\kern-.05em{\sc i\kern-.025em b}\kern-.08em
    T\kern-.1667em\lower.7ex\hbox{E}\kern-.125emX}}
\begin{document}

\title{Detection of Adversarial Physical Attacks in Time-Series Image Data\\
\thanks{$^*$Authors with equal contribution.}
}

\makeatletter
\newcommand{\linebreakand}{%
  \end{@IEEEauthorhalign}
  \hfill\mbox{}\par
  \mbox{}\hfill\begin{@IEEEauthorhalign}
}
\makeatother

 \author{
 \IEEEauthorblockN{Ramneet Kaur$^*$}
 \IEEEauthorblockA{\textit{Computer and Information Science} \\
 \textit{University of Pennsylvania}\\
 Philadelphia, PA, USA \\
 ramneetk@seas.upenn.edu}
 \and
 \IEEEauthorblockN{Yiannis Kantaros$^*$}
 \IEEEauthorblockA{\textit{Electrical and Systems Engineering} \\
 \textit{Washington University in St. Louis}\\
 St. Louis, MO, USA \\
 ioannisk@wustl.edu}
  \and
 \IEEEauthorblockN{Wenwen Si$^*$}
 \IEEEauthorblockA{\textit{Computer and Information Science} \\
 \textit{University of Pennsylvania}\\
 Philadelphia, PA, USA \\
 wenwens@seas.upenn.edu}
 \linebreakand
 \IEEEauthorblockN{James Weimer}
 \IEEEauthorblockA{\textit{Computer Science} \\
 \textit{Vanderbilt University}\\
 Nashvilee, TN, USA \\
 james.weimer@vanderbilt.edu}
\and
 \IEEEauthorblockN{Insup Lee}
 \IEEEauthorblockA{\textit{Computer and Information Science} \\
 \textit{University of Pennsylvania}\\
 Philadelphia, PA, USA \\
 lee@cis.upenn.edu}
 }

\maketitle

\begin{abstract}
Deep neural networks (DNN) have become a common sensing modality in autonomous systems as they allow for semantically perceiving the ambient environment given input images. Nevertheless, DNN models have proven to be vulnerable to adversarial digital and physical attacks. To mitigate this issue, several detection frameworks have been proposed to detect whether a \textit{single} input image has been manipulated by adversarial \textit{digital} noise or not. In our prior work, we proposed a  real-time detector, called VisionGuard (VG), for adversarial physical attacks against single input images to DNN models. Building upon that work, we propose $\text{VisionGuard}^*$ ($\text{VG}^*$), that couples VG with majority-vote methods, to detect adversarial physical attacks in \textit{time-series image data}, e.g., videos. This is motivated by autonomous systems applications where images are collected over time using onboard sensors for decision making purposes. We emphasize that majority-vote mechanisms are quite common in autonomous system applications (among many other applications), as e.g., in autonomous driving stacks for object detection. In this paper, we investigate, both theoretically and experimentally, how this widely used mechanism can be leveraged to enhance performance of adversarial detectors. We have evaluated $\text{VG}^*$ on videos of both clean and physically attacked traffic signs generated by a state-of-the-art robust physical attack. We provide extensive comparative experiments against detectors that have been designed originally for out-of-distribution data and digitally attacked images.
\end{abstract}

\begin{IEEEkeywords}
adversarial detectors, adversarial examples, neural networks, image classification
\end{IEEEkeywords}

\section{Introduction}\label{sec:introduction}

Deep neural networks (DNN) have seen renewed interest in the last decade due to the vast amount of available data and recent advances in computing. In autonomous systems, DNN are typically used as feedback controllers, motion planners, or perception modules such as object detection and classification. 
Nevertheless, brittleness of DNN has resulted in unreliable behavior and public failures, e.g., Tesla cars' crashes and Uber running a red light \cite{uber_crash}.  
In fact, several adversarial attack methods have been proposed recently that aim to minimally manipulate inputs to DNN models to cause desired incorrect outputs that would benefit an attacker. In image classification, that is also considered in this paper, adversarial attacks are typically categorized as digital and physical ones. The latter augment the physical world by placing adversarial stickers on the surface of objects of interest  \cite{karmon2018lavan,li2019adversarial,eykholt2018robust,brown2017adversarial,boloor2020attacking} (see e.g., Fig. \ref{fig:X}), while the former embeds an imperceptible amount of digital noise to the classifier input data \cite{carlini2017adversarial,papernot2016limitations,choi2022argan}. 
Digital attacks typically assume a threat model in which the adversary can feed data directly into the machine learning component. This is not necessarily the case for systems operating in the physical world, such as mobile robots or driverless cars that are using signals from cameras and other sensors as input. Motivated by these applications, this paper focuses exclusively on adversarial physical attacks.

Several \textit{detectors} have been proposed recently to detect adversarially attacked images; see Section \ref{sec:related}. Nevertheless, the large majority of these works primarily focus on determining whether a single image has been manipulated adversarially by a digital attack or not. To the contrary, in this paper, we focus on \textit{detecting} adversarial \textit{physical} attacks against DNN for image classification tasks in \textit{time-series image data} (e.g., videos).
Our work is motivated by autonomous driving applications, where a time-ordered sequence of images, collected by onboard cameras, containing  objects of interest is available that is used for control purposes. For instance, consider an autonomous car collecting images of a stop sign as it approaches an intersection. Our goal is to determine if the objects (e.g., traffic signs) in the collected images have been manipulated by adversarial targeted physical attacks proposed in \cite{eykholt2018robust}; examples of attacked traffic signs using \cite{eykholt2018robust} are shown in Fig. \ref{fig:X}.
In particular, we consider the following problem. We assume that a time-ordered sequence of images containing a single object of interest is given. We also assume the existence of a perfect object tracking algorithm that is capable of creating an ordered sequence containing the same object. This assumption can be relaxed by using e.g., recent probabilistic data association algorithms that can associate semantic sensor measurements to objects; see e.g., \cite{bowman2017probabilistic}. 
Our goal is to determine whether the object captured in the time-series data has been adversarially manipulated \cite{eykholt2018robust}. To achieve this, we design a new, simple yet effective, time-series detector, called $\text{VisionGuard}^*$ ($\text{VG}^*$). The proposed time-series detector builds upon (i) VisionGuard (VG), a recently proposed single-image detector against various digital attacks and the physical attacks \cite{kantaros2021real}, and (ii) majority vote mechanisms. Specifically, $\text{VG}^*$  consists of the following two steps: (1) VG is applied to all images that lie within a sliding detection window (capturing how many past images will be used) classifying each image as either adversarial or clean; (2) majority-vote is applied over the VG outputs. If the majority of the VG outputs is `adversarial', then  $\text{VG}^*$ classifies the object under investigation as adversarial; otherwise, it is considered clean. 
%
%
Intuitively, the detection performance of  $\text{VG}^*$ should increase as the size of available data increases or, equivalently, the length of the sliding window increases. To formalize this intuition, we provide conditions about the length of the sliding window and the detection accuracy of VG, under which the time-series detector achieves better detection performance than the single-image detector. In other words, this result shows that, as expected, exploiting past observations of an object interest - instead of relying only on the current input image - can enhance performance of adversarial detectors resulting in safer perception-based control loops.
We emphasize that majority-vote mechanisms are quite common in autonomous system applications (among several other applications), as e.g., in autonomous driving stacks \cite{Apollo} for object detection. Our goal is to investigate how this widely used mechanism can be leveraged to enhance performance of adversarial detectors. 

We demonstrate the efficiency of $\text{VG}^*$ on videos of clean and physically attacked traffic signs using stickers designed as in \cite{eykholt2018robust}. 
Additionally, we highlight that VG can be replaced by any other single-image detector to construct a time-series detector. Particularly, in our experiments, we compare the performance of $\text{VG}^*$ against time-series detectors constructed by simply replacing VG in step (1) with the detectors developed in \cite{lee2018simple,liang2018detecting}. 
We note that \cite{lee2018simple} proposes a detector for digitally attacked and out-of-distribution (OOD) data, while \cite{liang2018detecting} focuses exclusively on digital attacks.
Our  experiments validate the previously theoretical connection between the length of detection window and the accuracy of the employed single-image detector, i.e., the more accurate the single-image detector, the better the performance of the corresponding time-series detector as the window length increases. 
Finally, in the appendix, we provide new comparative experiments of VG against the detectors in \cite{lee2018simple,liang2018detecting} on adversarial patches \cite{brown2017adversarial} that do not exist in the original paper that proposed VG \cite{kantaros2021real}. 
The adversarial attack in \cite{brown2017adversarial} generates patches that can be placed 
in the vicinity of an object of interest to cause desired misclassifications. 
Our comparative experiments further validate the claim made in \cite{kantaros2021real} that existing detectors that have been originally developed for digitally attacked or OOD data, as e.g.,  \cite{lee2018simple,liang2018detecting}, may perform poorly at detecting physical attacks.

\begin{figure}[t]
  \centering
  \subfigure{
    \label{fig:stop2}
    \includegraphics[width=0.30\linewidth]{stop2.jpg}}
     \subfigure{
    \label{fig:yield1}
    \includegraphics[width=0.30\linewidth]{yield2.jpg}}
       \subfigure{
    \label{fig:speed2}
    \includegraphics[width=0.23\linewidth]{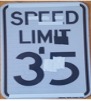}}
\vspace{-0.35cm}
  \caption{
   Traffic signs manipulated adversarially by the $\text{RP}_2$ attack \cite{eykholt2018robust} to fool LISA-CNN. The stickers have been placed so that the stop, yield and speed limit signs are classified as
   speed limit 35, speed limit 35, and turn right signs, respectively \cite{kantaros2021real}.\vspace{-0.6cm}
  }
  \label{fig:X}
\end{figure}


\subsection{Related Work}\label{sec:related}
\vspace{-0.1cm}
To protect DNN-based image classifiers from adversarial attacks, several detection and monitoring methods have recently been proposed. Similar to our work, these methods aim to identify whether a given input image has been manipulated or not. As it will discussed later in the text, unlike our work, the large majority of existing detectors focuses on \textit{single} images that may have been under \textit{digital} attacks. A summary of such works follows.
To detect adversarial images, application of image transformation and processing techniques to input images has been proposed in \cite{tian2018detecting,meng2017magnet,liang2018detecting,liu2021feature,idecode}. VG, mentioned earlier, also lies in this category \cite{kantaros2021real}. 
Adversarial detectors that rely on identifying adversarial subspaces in a manifold of clean data have also been proposed \cite{feinman2017detecting,jha2018detecting,crecchi2019detecting,pang2018towards, ma2018characterizing,lee2018simple}. 
Related is also the recent work in \cite{rossolini2022increasing} that uses coverage criteria to detect adversarial inputs. 
Defenses that rely on training separate DNN that are capable of identifying adversarial images have been proposed as well \cite{fidel2020explainability,cohen2020detecting,liu2022towards}. 
%
Ensembles of previously discussed detectors and defenses against adversarial attacks is considered in \cite{he2017adversarial}.
A more comprehensive summary of existing defenses against adversarial inputs can be found in \cite{deng2021deep,bulusu2020anomalous}.
We emphasize that the above defenses have  been evaluated only on digital attacks and on single input images. 

OOD detection techniques proposed in \cite{cai2020real,yang2022interpretable} have been evaluated against a physical attack \cite{boloor2020attacking}.  Reconstruction error by variational auto-encoder on a \textit{single input image} relative to the training data is used as the non-conformity measure (NCM) for OOD detection in~\cite{cai2020real}.~\cite{yang2022interpretable} propose computing prototypes from the training data and using the distance of an input image from these prototypes for OOD detection on a \textit{single input image}.
A similar line of work for OOD detection in time-series data based on NCM has been proposed~\cite{codit}. This work uses error in equivariance with respect to temporal transformations learned on the in-distribution data as the non-conformity score for detection. They, however, do not consider the detection of adversarial attacks.
%
%

Defenses against adversarial patches have also been proposed in \cite{chiang2020certified,xiang2021patchguard,zhang2020clipped,sridhar2022towards}, where robust training methods have been proposed to enhance the resiliency of the DNN classifier against adversarial inputs.  
Unlike these, this paper focuses on \textit{detecting} physical attacks without modifying the structure of the DNN image classifier. In fact, detection frameworks are complementary to robust training methods, as the former essentially aim to monitor the input to DNN models (whether they are trained in a robust fashion or not).  Moreover, as discussed earlier, we note that the proposed time-series detector is modular in the sense that VG (step (1)) can be replaced by any other single-image detector such as \cite{boloor2020attacking} or \cite{lee2018simple,liang2018detecting}. 
%
Additionally, we provide the first evaluation of adversarial detectors \cite{lee2018simple,liang2018detecting}, originally proposed for digital attacks and/or OOD data, against state-of-the art robust physical attacks \cite{eykholt2018robust,brown2017adversarial} and we show that under certain conditions their performance can be improved by exploiting historical data. 

\subsection{Contribution}
The contributions of this paper can be summarized as follows. 
\textit{First}, we introduce a new, simple, yet effective and modular, detection framework for physical attacks in time-series image data.
\textit{Second}, unlike the large majority of existing detectors that focus on digital attacks or OOD data, this paper focuses on physical attacks and provides the first evaluation of these detectors against physical attacks. \textit{Third}, we show that the proposed framework is computationally efficient allowing for its employment in real-time applications, where images are collected online. \textit{Fourth}, we provide conditions for VG (or any other employed single-image detector) and the length of the sliding detection window (or equivalently the size of historical data) that need to be satisfied so that the time-series detector $\text{VG}^*$ performs better than VG (or the corresponding employed single-image detector). \textit{Fifth}, we provide extensive comparative experiments against robust physical attacks demonstrating the efficiency of $\text{VG}^*$. 
%
%

\section{Problem Formulation}\label{sec:attacks}

%
Consider a classifier $f:\ccalX\rightarrow \ccalC$, where $\ccalX$ is the set of images $x\in\mathbb{R}^{m}$, where $m$ is the number of pixels, and $\ccalC$ is the set of labels. Let $L(x)$ and $f(x)$ denote the true and the predicted label of image $x\in\ccalX$, respectively. 
Consider also an attacker that aims to attack $f$ to cause desired misclassifications. In particular, here we consider physical attacks that aim to place adversarial stickers on the surface of objects of interest (e.g., traffic signs) so that an image $x$ that includes a physically attacked object gets a desired (wrong) label $t(x)$ i.e., $f(x)=t(x)$ and $t(x)\neq L(x)$. In section \ref{sec:RP2}, we review a targeted robust physical attack \cite{eykholt2018robust}.  In Section \ref{sec:detectPr}, we present the adversarial detection problem that this paper addresses.

%
%

\vspace{-0.2cm}
\subsection{Robust Physical Perturbations}\label{sec:RP2}
A targeted robust physical attack is presented in \cite{eykholt2018robust} to generate robust visual adversarial perturbations under different physical conditions. The first step requires to solve the following optimization problem that generates adversarial (digital) noise $\delta$: $\underset{\delta}{\text{min}}~\lambda||M_x \delta||_p+NPS+\mathbb{E}_{x_i\sim X^V}J(f(x_i+T_i(M_x\delta)),y^*),$
where (i) $M_x$ is a mask applied to image $x$ to ensure that the perturbation is applied only to the surface of the object of interest (e.g., on a traffic sign and not in the background); (ii) NPS is a non-printability score to account for fabrication error; (iii) $X^V$ refers to a distribution of images containing an object of interest (e.g., a stop sign) under various environmental conditions captured by digital and physical transformations (resulting in attacks being robust to various environmental conditions) ; (iv)  $T_i(\cdot)$ denotes the alignment function that maps transformations on the object to transformations on
the perturbation (e.g., if the object is rotated, the perturbation is rotated as well); and (v) $y^*$ is the target label. 
Finally, an attacker will print out the optimization result on paper, cut out the perturbation $M_x$, and put it onto the target object. The perturbation generated using the $L_1$ norm along with its physical application is shown in Figure \ref{fig:attackedl1}.  Observe in this figure, that the $L_1$ norm generates a sparse attack vector allowing the attacker to physically implement the attack with black/white stickers, as shown in Fig. \ref{fig:X}. 

\begin{figure}[t]
  \centering
    \includegraphics[width=1\linewidth]{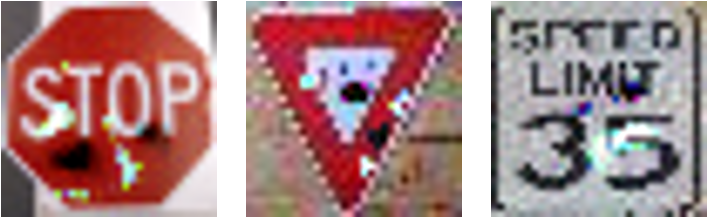}\vspace{-0.65cm}
  \caption{Perturbation generated by the $\text{RP}_2$ attack using the authors-provided code under the $\ell_1$ norm so that a stop, yield, and speed limit 35 sign are misclassified by the LISA CNN \cite{eykholt2018robust} as speed limit 35, speed limit 35, and turn right sign, respectively. These images are used as a guide to place adversarial stickers on the surface of traffic signs as in Fig. \ref{fig:X}.}\vspace{-0.4cm}
  \label{fig:attackedl1}
\end{figure}
\subsection{Detection Problem}\label{sec:detectPr}
\vspace{-0.1cm}
Consider an input image $x_t$, collected at a time instant $t\geq0$, to the DNN image classifier $f$ which contains a single object of interest. Consider also an ordered sequence $X_{0:t-1}=x_0,x_1,\dots,x_{t-1}$ of images $x_n$, $n\in\{0,1,\dots,t-1\}$ containing the same object of interest as $x_t$ does. For instance, consider an autonomous car approaching a stop sign. In this case, $x_t$ is the current input image - containing the stop sign  - to the DNN-based perception system $f$ while $X_{0:t-1}$ denotes a time-ordered sequence of observations/images of the same stop sign taken as the car was approaching it. Our goal in this paper is to build a detection method that determines if $x_t$ is an adversarial image, i.e., if $x_t$ contains an object that has been physically attacked, by leveraging the available historical data $X_{0:t-1}$. We note that in this paper we assume the existence of a perfect object tracking algorithm that is capable of creating an ordered sequence $X_{0:t}$ containing the same object. This assumption can be relaxed by using e.g., recent data association algorithms \cite{bowman2017probabilistic}, which however, is out of the scope of this paper. 
The problem that this paper addresses can be summarized as follows:



\begin{pr}\label{pr1}
Given (i) a DNN $f:\ccalX\rightarrow\ccalC$ that is subject to physical attacks discussed in Section \ref{sec:RP2}; (ii) an input image $x_t\in\ccalX$ under investigation containing an object of interest; and (iii) an ordered sequence of past observations/images $X_{0:t-1}$ containing the same object of interest as $x_t$ does, our goal is to design a detector $f_d:\ccalX_{0:t}\rightarrow\{\text{adversarial, clean}\}$ that classifies the current input image $x_t$ as adversarial or clean given a sequence of past observations $\ccalX_{0:t-1}$.
\end{pr}


\section{$\text{VisionGuard}^*$: A New Adversarial Time-Series Detector }\label{sec:ImageDetector}

To solve Problem \ref{pr1}, we present a time-series detector that builds upon VisionGuard (VG), an existing single-image adversarial detector \cite{kantaros2021real}. To this end, first, in Section \ref{sec:vg}, we review VG. Second, in Section \ref{sec:timeSeriesDet}, we propose a time-series detector to address Problem \ref{pr1}. Finally, in Section \ref{sec:Prop}, we demonstrate the effect of using historical data, that exist in time-series data, on detection performance.
\vspace{-0.1cm}

\subsection{VisionGuard: A Single-Image Adversarial Detector}\label{sec:vg}


VG is a dataset- and attack-agnostic detector that relies on the observation that adversarial inputs are not robust to certain transformations in the sense that these transformations change the output of the DNN significantly \cite{kantaros2021real}. Typically, such transformations are (i) label-invariant, i.e., the true class of the object of interest should not change under this transformation, and (ii) squeeze out features that may be unnecessary for correct classification such as adversarial components. 
In \cite{kantaros2021real}, it was shown that lossy compression and brightness transformations can alleviate the effect of digital and physical attacks, respectively.

VG comprises four steps to detect whether an input image $x$ is adversarial or not. 
First, the input image $x$ is fed to the classifier $f$ to get the softmax output denoted by $\bbg(x)$. Second, a user-specified transformation $t:\ccalX\rightarrow\ccalX$ is applied to $x$ to get an image $x'=t(x)$. Third, $x'$ is fed to the classifier to get the softmax output denoted by $\bbg(x')$. Fourth, $x$ is classified as adversarial if the softmax outputs $\bbg(x)$ and $\bbg(x')$ are significantly different. Formally, similarity between $\bbg(x)$ and $\bbg(x')$ is measured using the K-L divergence measure, denoted by $D_{\text{KL}}(\bbg(x),\bbg(x')$ and defined as follows: $D_{\text{KL}}(\bbg(x),\bbg(x') = \sum_{c\in\ccalC}\bbg_c(x)\log\frac{\bbg_c(x)}{\bbg_c(x')}$
where $\bbg_c(x)$ denotes the $c$-th entry in the softmax output vector $\bbg(x)$; in other words, $\bbg_c(x)$ can be viewed as the probability that the class of image $x$ is $c$.  Specifically, if
\begin{equation}\label{eq:Jxx}
    J(x,x')=\min(D_{\text{KL}}(\bbg(x),\bbg(x')),D_{\text{KL}}(\bbg(x'),\bbg(x)))
\end{equation} 
is greater than a threshold $\tau$, then $x$ is classified as an adversarial input, i.e., $\hat{f}_d(x)=1$ (i.e., $x$ is \text{adversarial}); otherwise, $x$ is classified as a legitimate/clean image, i.e., $\hat{f}_d(x)=\text{clean}$. In \cite{nesti2021detecting}, it is mentioned that performance of VG can be improved if the max, instead of the min, operator is employed in \eqref{eq:Jxx}. Hereafter, to stay consistent with \cite{kantaros2021real}, we use the min operator. The detection threshold is selected using Receiver Operating Characteristics (ROC) graphs. Given an ROC graph computed on a validation set, a threshold that yields high true positive and low false alarm rate is selected.  
\subsection{$\text{VisionGuard}^*$: Detecting Attacks in Time-Series Image Data}\label{sec:timeSeriesDet}


In what follows, we build a new modular time-series detector,  called $\text{VisionGuard}^*$ ($\text{VG}^*$), that classifies an image $x_t$ as clean or adversarial, given a sequence $\ccalX_{0:t-1}$ of past images/observations. As mentioned in Section \ref{sec:detectPr}, hereafter, we assume that all images in the sequence $\ccalX_{0:t}$ contain the same object of interest.
%
Particularly, let $X_{0:t-1}=x_0,x_1,\dots,x_{t-1}$ be an ordered sequence of $t-1$ past observations/images $x_n$, $n\in\{0,1,\dots,t-1\}$ and $x_t$ denote the current input image. Also, let $T\geq0$ denote the length of a sliding detection window capturing how many past images from $X_{0:t}$ can be used. To determine whether $x_t$ is an adversarial image, we examine what is the output of $\hat{f}_d$ on the last $T$ images (including $x_t$). Then a majority-vote mechanism is employed to reason about $x_t$; see Alg. \ref{alg:det2}. Formally, the proposed detector is constructed as follows. 
%
First, we compute the average output of $\hat{f}_d$ over the past $T$ images as follows:\footnote{Note that \eqref{eq:majVote} can be written more generally as the weighted average of the detector outputs where e.g., a larger weight is assigned to more important frames. For instance, in an autonomous driving setup, a larger weight to collected images as the car approaches the object of interest (e.g., a traffic sign).}
\begin{equation}\label{eq:majVote}
  s(x_t) =
    \begin{cases}
      \frac{\sum_{n=t-T}^t \hat{f}_d(x_n)}{T} & \text{if}~ t\geq T\\
      \frac{\sum_{n=0}^t \hat{f}_d(x_n)}{T} & \text{otherwise}
    \end{cases}       
\end{equation}



Observe that if $s(x_t)\geq 0.5$, then this means that the majority of the past  $T$ images, including the current image $x_t$, are classified as adversarial. Otherwise, the majority is classified as clean. Thus, given $s(x_t)$, we define a time-series detector $f_d$ that complies with majority vote, i.e.,
\begin{equation}\label{eq:VGstar}
    f_d(x_t|X_{t-T:t-1}) =
    \begin{cases}
     1 ~(\text{i.e., adversarial}) & \text{if}~ s(x_t)\geq 0.5\\
     0 ~(\text{i.e., clean}) & \text{otherwise}
    \end{cases}       
\end{equation}
\begin{rem}[Variants of the Time-Series Detector]
We note that $\text{VG}^*$ can be extended in various ways. First, any single image detector can be used in place of VG to reason about a single image; see Section \ref{sec:sim1}. Second, to reason about a single image frame $x_n$, majority-vote over the output of an ensemble of single-image detectors can be applied as opposed to simply relying to the VG output. Alternatively, an ensemble of DNN image classifiers can be used (as opposed to a single one considered in this paper) as in \cite{pang2019improving}. Once a single image $x_n$ has been classified as adversarial or clean using any of the above ways, majority vote can be applied as in \eqref{eq:majVote}-\eqref{eq:VGstar}. 
\end{rem}


\begin{algorithm}[t]
   \caption{$\text{VisionGuard}^*$: Time-Series Detector}
   \label{alg:det2}
\begin{algorithmic}
   \STATE {\bfseries Input:} \{Length $T$ of detection window; Sequence of images $X_{t-T:t}$; Single-Image detector $\hat{f}_d$\}
   \STATE {\bfseries Output:} \{$f_d(x_t|X_{t-T:t-1})=1$ if $x_t$ is adversarial, and $f_d(x_t|X_{t-T:t-1})=0$ otherwise\}
   \STATE Feed images $x_n$ to $\hat{f}_d$ for all $n\in\{t-T, t-T+1, \dots, t\}$  \label{det2:fd}
   \STATE Compute $s(x_t)$ as in  \eqref{eq:majVote} \label{det2:maj}
   \STATE Initialize $f_d(x_t|X_{t-T:t-1})=0$
   \IF{$s(x_t)\geq 0.5$}
   \STATE $f_d(x_t|X_{t-T:t-1})=1$
   \ENDIF
\end{algorithmic}
\end{algorithm}
\subsection{Enhancing Detection Performance via Historical Data}\label{sec:Prop}
\vspace{-0.1cm}
In this section, we provide conditions for the single-image detector $\hat{f}_d$ and the length $T$ of the sliding detection window that need to be satisfied so that the proposed time-series detector $f_d$ achieves better detection performance than $\hat{f}_d$. These conditions are formally stated in the following proposition; see also Fig. \ref{fig:boundT}. 

\begin{prop}\label{prop:betterPerf}
Let $x_t$ denote an image under investigation and $X_{0:t-1}$ be a time-ordered sequence of images. Assume that $x_t$ and $X_{0:t-1}$ contain the same object of interest. Also let $\hat{f}_d:\ccalX\rightarrow\{0,1\}$ denote a single-image detector with probability $\hat{p}$ of correct classification/output. Then if (i) the outputs of $\hat{f}_d$ over all images in $X_{0:t-1}$ and $x_t$ are independent; (ii) $\hat{p}>0.5$; and (iii) the number $L$ of available past images in $X_{0:t-1}$ to reason about $x_t$
satisfies the following condition:\footnote{Notice that by construction of $f_d$, we have that $L$ is defined as follows: $L=\min\{t, T\}$, i.e., $L=T$ if $t\geq T$ and $L=t$ otherwise.}
\begin{align}\label{eq:boundT}
    L> \frac{2[\ln(2)-\ln(1-\hat{p})]}{(2\hat{p}-1)^2},
\end{align}
then, it holds that $p>\hat{p}$, where $p$ denotes the probability of correct classification of the time-series detector $f_d$. 
\end{prop}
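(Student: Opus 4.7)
The plan is to reduce the proof to a standard Bernoulli concentration argument. By assumption (i), the $L$ single-image outputs $\hat f_d(x_n)$ on the images $x_n$ in the sliding window are mutually independent. Let $Z_n\in\{0,1\}$ indicate that $\hat f_d$ is correct on the $n$-th image; then $\{Z_n\}$ are i.i.d.\ Bernoulli($\hat{p}$) with $\hat{p}>1/2$ by assumption (ii). A short case analysis of \eqref{eq:VGstar} shows that, regardless of whether the true label of the object of interest is adversarial or clean, the time-series detector $f_d$ classifies $x_t$ correctly if and only if the empirical mean $\bar Z = \tfrac{1}{L}\sum_n Z_n$ exceeds $1/2$; intuitively, the majority vote is correct exactly when the majority of the per-image decisions are correct.

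Next I would bound the error probability $1-p = \Pr[\bar Z \leq 1/2]$. Because $\mathbb{E}[\bar Z]=\hat{p}$ and $1/2 < \hat{p}$, the event of interest is a lower-tail deviation of $\bar Z$ from its mean by $\hat{p}-1/2>0$. Applying Hoeffding's inequality to the bounded i.i.d.\ variables $Z_n\in[0,1]$ in its two-sided form --- which furnishes the factor of $2$ in the final bound and can be motivated as a union bound simultaneously controlling the false-positive and false-negative rates of $f_d$ --- gives
\[
1-p \;=\; \Pr\!\left[\bar Z \leq \tfrac{1}{2}\right] \;\leq\; 2\exp\!\left(-2L(\hat{p}-\tfrac{1}{2})^2\right) \;=\; 2\exp\!\left(-\tfrac{L(2\hat{p}-1)^2}{2}\right).
\]

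Finally I would demand that this bound be strictly less than $1-\hat{p}$, which is sufficient for $p>\hat{p}$; taking logarithms and rearranging produces exactly condition \eqref{eq:boundT}. I anticipate that the only subtle point is the handling of ties in \eqref{eq:VGstar} (the boundary case $s(x_t)=1/2$), which is absorbed by using $\Pr[\bar Z \leq 1/2]$ as the error upper bound rather than the strict inequality. No structural property of the image distribution or the specific attack is required beyond the independence assumption; the argument is purely a concentration inequality for i.i.d.\ Bernoulli outputs, and a tighter (one-sided Chernoff) bound would only sharpen the $\ln 2$ constant without altering the form of the condition on $L$.
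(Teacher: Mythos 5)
Your proposal is correct and follows essentially the same route as the paper: both define i.i.d.\ Bernoulli indicators of per-image correctness, bound the majority-vote error probability by the two-sided Hoeffding inequality to obtain $2e^{-L(2\hat{p}-1)^2/2}$, and solve for $L$ so that this bound falls below $1-\hat{p}$. Your explicit treatment of the tie case $s(x_t)=1/2$ via the non-strict inequality $\Pr[\bar Z \le 1/2]$ is a small point of added care over the paper's write-up, but it does not change the argument.
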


\begin{proof}
This result is shown by applying Hoeffding's inequality.
Let $X_{t-T:t}$ be the ordered sub-sequence of images that lie within the sliding detection window of $f_d$. 
We define random variables $Y_n$ that are $1$, if $\hat{f}_d$ classifies the corresponding image $x_n$ correctly, and $0$ otherwise, for all $n\in\{t-T,\dots,t\}$. 
\begin{figure}[t]
  \centering
    \includegraphics[width=0.6  \linewidth]{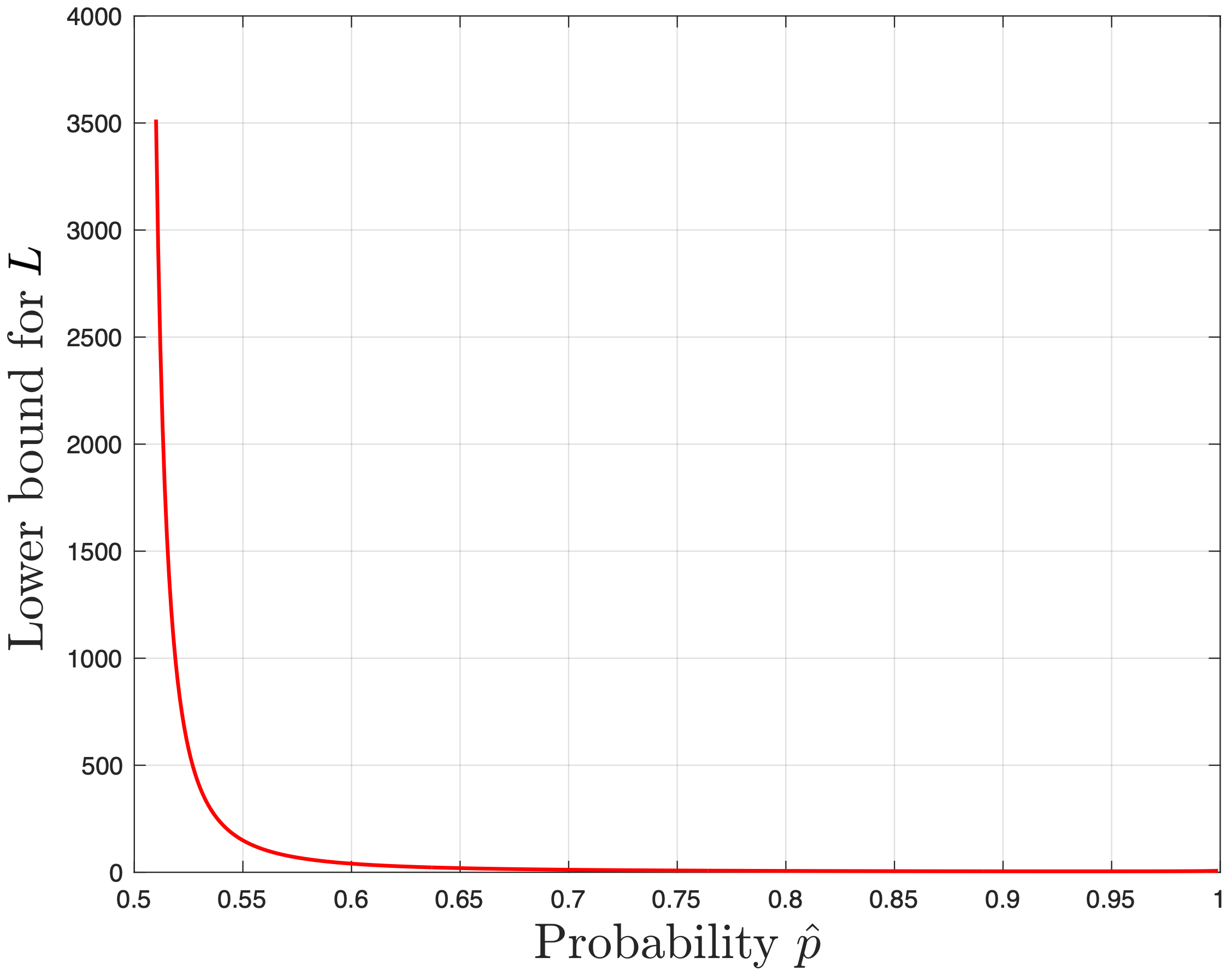}
    \vspace{-0.3cm}
  \caption{Graphical illustration of Proposition \ref{prop:betterPerf}. 
  The curve corresponds to the lower bound \eqref{eq:boundT}. Observe that as $\hat{p}\to 1$, the lower bound for the length $L$ that needs to be satisfied so that the time series detector $f_d$ performs better than the single-image detector $\hat{f}_d$ decreases. For example, if $\hat{p}=0.6$ then $L> 40.24$, while if $\hat{p}=0.9$ then $L> 4.69$. }\vspace{-0.5cm}
  \label{fig:boundT}
  \vspace{-0.2cm}
\end{figure}
%
Next, we define $Y_{x_t}=\sum_{n=t-L}^t Y_n$, i.e., $Y_{x_t}$ is a random variable counting the number of correct classifications of $\hat{f}_d$ over the sub-sequence of images $X_{t-L:t}=x_{t-L}, x_{t-L+1},\dots,x_t$. By definition, $Y_{x_t}$ follows a binomial distribution with mean $T\hat{p}$. By construction of $f_d$ the probability that $f_d$ misclassifies $x_t$, satisfies: $\mathbb{P}(f_d(x_t)\neq c(x_t))=1-p=\mathbb{P}(Y_{x_t}<\frac{L}{2})$,
where $c(x_t)$ denotes the correct `adversarial' label of $x_t$, i.e., $c(x_t)\in\{\text{clean}, \text{adversarial}\}$. Thus, we have that:
$\mathbb{P}\left(f_d(x_t)\neq c(x_t)\right)=\mathbb{P}\left(Y_{x_t}<\frac{L}{2}\right)\leq \mathbb{P}\left(Y_{x_t}-L\hat{p}<\frac{L}{2}-L\hat{p}\right)\leq \mathbb{P}\left(|Y_{x_t}-L\hat{p}|>\frac{L(2\hat{p}-1)}{2}\right)= \mathbb{P}\left(|Y_{x_t}-\mathbb{E}(Y_{x_t})|>\frac{L(2\hat{p}-1)}{2}\right)\leq 2e^{-\frac{L(2\hat{p}-1)^2}{2}}$
where the last inequality is due to Hoeffding's inequality. Note that Hoeffding's inequality can be applied if (a) $Y_n$ are independent random variables, which holds by assumption (i) and (b) $\frac{L(2\hat{p}-1)}{2}>0$, which holds since, by assumption (ii), we have $\hat{p}>0.5$. Observe that the derived upper bound (i.e., $2e^{-\frac{L(2\hat{p}-1)^2}{2}}$) goes to $0$ as  $L$ goes to infinity. Next, using this upper bound, we show that if $L$ satisfies \eqref{eq:boundT} then the detection performance of $f_d$ is greater than the performance of $\hat{f}_d$, i.e., $p>\hat{p}$ or, equivalently, $1-p<1-\hat{p}$. In math, we show that the following inequality 
\begin{equation}\label{eq:show1}
    \mathbb{P}(f_d(x_t)\neq c(x_t))-\mathbb{P}(\hat{f}_d(x_t)\neq c(x_t)<0,
\end{equation}
where $\mathbb{P}(f_d(x_t)\neq c(x_t))=1-\hat{p}$, holds if \eqref{eq:boundT} is true. 
An over-approximate range of values for $L$ so that \eqref{eq:show1} is satisfied can be found by using the above derived upper bound. Specifically, in what follows, we compute values for $L$ so that $2e^{-\frac{T(2\hat{p}-1)^2}{2}}-(1-\hat{p})<0$.
Solving this inequality for $L$ yields \eqref{eq:boundT}.\footnote{Note that this requires to assume that $1-\hat{p}>0$, i.e., $\hat{p}<1$ so that $\ln(1-\hat{p})$ is well defined. Nevertheless, if $\hat{p}=1$, then it trivially holds that any length $L\geq0$ can be used to maintain the perfect performance of the single-image detector.}
Note that this lower bound is positive since $\ln(2)\approx0.69>0$ while $-\ln(1-p)>0$ due to $1-\hat{p}<1$. Consequently, if the outputs of the single image detector $\hat{f}_d$ are independent, $\hat{p}>0.5$, and $L$ satisfies \eqref{eq:boundT}, then $p>\hat{p}$ completing the proof.
%
%
\end{proof}
\begin{rem}
We note that if the single-image detector satisfies the conditions (i)-(iii) of Prop. \ref{prop:betterPerf}, then as $L\to\infty$, the probability of misclassification for the time-series detector goes to $0$. 
Also, note that Prop. \ref{prop:betterPerf}, implies that the length $T$ of the sliding window should be designed so that it is larger than the lower bound captured in \eqref{eq:boundT}. 
\end{rem}

\section{Experiments}\label{sec:sim1}

In this section, we extensively demonstrate the performance of $\text{VG}^*$ and compare its performance against baseline detectors \cite{lee2018simple,liang2018detecting} summarized in Section \ref{sec:sumDetExp}. Particularly, in Sections
\ref{sec:setupSingle}, we first discuss how these baseline time-series detectors are set up. In Section \ref{sec:evalDet}, we present the data collection process that involves recording videos of clean and adversarial traffic signs.
%
Then, in Sections \ref{sec:evalDet2}- \ref{sec:evalDet3}, we compare the performance of $\text{VG}^*$ on the above-mentioned videos against the baseline time-series detectors. 
Our experiments validate that historical data can enhance performance of single-image detectors, as shown in Proposition \ref{prop:betterPerf}. In summary, our comparative experiments show that the proposed detector is characterized by a more consistent performance than the competitive detectors whose performance changes significantly across attacks, datasets, and DNN models. Finally, we discuss the real-time performance of the proposed detectors. All experiments have been executed on a computer with Intel(R) Xeon(R) Gold 6148 CPU, 2.40GHz. 

\subsection{Summary of Considered Baseline Detectors}\label{sec:sumDetExp}
\textbf{Mahalanobis-based Detector (MD):}
This detector relies on computing a confidence score for each test sample; if this confidence score is above a threshold then the test image is considered abnormal or ouut-of-distribution \cite{lee2018simple}. To define this confidence score, the Mahalanobis distance to the training data set is computed. Specifically, $|\ccalC|$ class-conditional distributions are defined for each layer of the targeted DNN that are assumed to be Gaussian distributions. In math, for each layer $\ell$ of the DNN $f$, the empirical class mean $\mu_{c,\ell}$ and covariance $\Sigma_{\ell}$ of training samples $\{(x_i,y_i)\}_{i=1}^N$, where $y_i$ is the true label of $x_i$, are defined as follows: (i) $\mu_{c,\ell}=\frac{1}{N_c}\sum_{i:y_i=c}f_{\ell}(x_i)$; and $ \Sigma_{\ell}=\frac{1}{N_c}\sum_{c\in\ccalC}\sum_{i:y_i=c}(f_{\ell}(x_i)-\mu_{c,\ell})(f_{\ell}(x_i)-\mu_{c,\ell})^T$,
where $N_c$ is the total number of training samples with label $c$ and  $f_{\ell}(x_i)$ refers to the output of the $\ell$-th layer of the DNN $f$. Then, given a new sample $x$, for each layer $\ell$, the following steps are executed. First, the closest class $\hat{c}$ is computed as per the Mahalanobis distance: $\hat{c}=\argmin_c(f_{\ell}(x)-\hat{\mu}_{c,\ell})^T\hat{\Sigma}_{\ell}^{-1}(f_{\ell}(x_i)-\hat{\mu}_{c,\ell})$. Second, a small noise is added to $x$, i.e., $\hat{x}=x-\epsilon\nabla_x(f_{\ell}(x)-\hat{\mu}_{\hat{c},\ell})^T\hat{\Sigma}_{\ell}^{-1}(f_{\ell}(x_i)-\hat{\mu}_{\hat{c},\ell}))$. Third, a confidence score $M_{\ell}$ is computed as: $M_{\ell}=\max_c-(f_{\ell}(\hat{x})-\hat{\mu}_{c,\ell})^T\hat{\Sigma}_{\ell}^{-1}(f_{\ell}(\hat{x})-\hat{\mu}_{c,\ell})$. Once these steps are repeated for all layers $\ell$, the following confidence score is computed which is used for  detection: $M=\sum_\ell\alpha_\ell M_\ell$. This detector has been tested against digital attacks and OOD data \cite{lee2018simple}.

\textbf{Entropy-based Detector (ED):}
This detector relies on the observation that digital attacks embed an imperceptible amount of noise to images \cite{liang2018detecting}. 
Motivated by this observation, the adversarial perturbation is treated as artificial noise and leverages adaptive noise reduction techniques
to reduce its adversarial effect, where aggressiveness of the denoising strategy is selected based on the entropy of the image. 
%
%
%
In particular, this detector comprises the following steps. 
First, the entropy of an image $x$ with dimensions $M\times N$ is computed as: $H=-\sum_{i=0}^{255} p_i\log_2 p_i,$ 
where $p_i=f_i/(M\times N)$ and $f_i$ is the frequency of pixel level $i\in[0,1,\dots,255]$. For RGB images, the entropy is computed as the average of entropies over the three channels.
Next, the denoising process follows. First, the image gets quantized using a scalar quantization method where  all inputs (i.e., pixel values) within a specified interval are mapped to a common value (called codeword), and the inputs in a different interval will be mapped to a different codeword. The number of intervals for uniform quantization is determined using the entropy of the image. Next, the entropy of the quantized sample is computed to determine if spatial smoothing will be applied. Only if the entropy is larger than a threshold, spatial smoothing (e.g., using a box filter) is applied. The image generated by this adaptive denoising process is denoted by $T(x)$.
Finally, the original image $x$ and the filtered one $T(x)$ are sent to the target classifier. If the predicted labels for $x$ and $T(x)$ are the same, then $x$ is considered benign/clean. Otherwise, it is identified as adversarial.

\textbf{Time-Series Detectors $\text{MD}^*$ \& $\text{ED}^*$:} Following the same logic as in Section \ref{sec:timeSeriesDet}, we construct the time series detectors $\text{MD}^*$, and $\text{ED}^*$ based on the single-image detectors $\text{MD}$, and $\text{ED}$. Specifically, these detectors are built exactly as $\text{VG}^*$; the only difference is that VG has been replaced by MD and ED.

\subsection{Setting Up Time-Series Detectors}\label{sec:setupSingle}
In what follows, we present the target DNN models  and we discuss how we set up the single image detectors, discussed above, for the $\text{RP}_2$ attack on traffic signs. We have selected AdvNet as a training set to configure all detectors; AdvNet is a recently proposed dataset with clean and physically attacked traffic signs \cite{kantaros2021real}. All detectors are evaluated on a test set constructed by extracting frames of videos with clean and physically attacked traffic signs. More details follow.

\textbf{Target DNN Models:} 
Hereafter, in our evaluation, we consider the LISA-CNN \cite{eykholt2018robust} and the GTSRB-CNN  \cite{stallkamp2012man} that can classify traffic signs. Both CNNs take as input $32\times 32$ RGB images while LISA and GTSRB CNN have $17$ and $43$ labels/outputs, respectively. LISA-CNN has
91\% accuracy on the LISA test set and the GTSRB-CNN achieves 95.7\% accuracy on the GTSRB test set.

\textbf{VG:} As discussed in Section \ref{sec:ImageDetector}, to select a detection threshold $\tau$ for VG, we construct the ROC curve of VG on AdvNet, which captures the trade-off between the true positive (TP) and false positive/alarm (FP) rate for a range of thresholds. Then, based on the ROC curve, we select a threshold that yields a high TP rate and low FP rate. For instance, the ROC curve when the target neural network is the LISA CNN is shown in Figure \ref{fig:ROC1}; the area under this ROC (AUROC) is $89.43\%$. Then, we pick a threshold $\tau=0.43$ which yields a TP rate equal to $80\%$ and a FP rate equal to $19\%$. Following the same process, we select $\tau=0.78$ for the GTSRB CNN. Note that for both CNNs, VG applies brightness transformation with $\text{HSV}=200$.

\begin{figure}[t]
  \centering
    \includegraphics[width=0.55\linewidth]{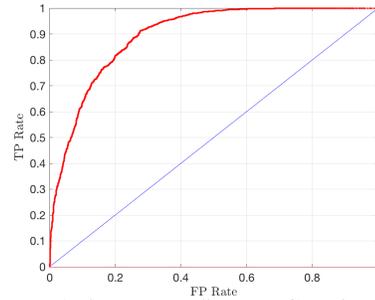}
    \vspace{-0.4cm}
  \caption{ROC curve (red) corresponding to VG equipped with brightness transformation with $\text{HSV}=200$ on AdvNet for LISA CNN \cite{eykholt2018robust}. The detection threshold $\tau=0.43$ yields $\text{TP rate}=80\%$ and $\text{FP rate}=19\%$ on AdvNet. Each point in the ROC curve corresponds to a detection threshold while the blue line represents the ROC of a random detector.}\vspace{-0.7cm}
\label{fig:ROC1}
\end{figure}

\textbf{MD:} To train the Mahalanobis detector, we followed the same process as we did for VG. First, we compute the ROC of this detector using  AdvNet. Then, using the ROC curve, we pick a threshold that yields a high TP and a low FP rate. The AUROC of MD for the LISA CNN is $81.2\%$ which is smaller than the AUROC of VG. In fact, given any fixed TP rate, the corresponding FP rate of MD is higher than that of VG. For instance, for $\text{TP rate}=80\%$, MD yields a FP rate that is approximately $45\%$ (while for VG, $\text{TP rate}=19\%$). Thus,  prioritizing a low FP rate, we select a threshold which yields $\text{TP rate}=50\%$ and $\text{FP rate}=8\%$. Nevertheless, the choice of the threshold is user- and scenario- specific.


\textbf{ED:} This detector requires the selection of a spatial smoothing filter \cite{liang2018detecting}. In \cite{liang2018detecting}, it was shown that $5\times 5$ cross, $7\times 7$ cross, $5\times 5$ diamond, $7\times 7$ diamond, and $5\times 5$ box filters yield satisfactory performance against digital attacks. To pick a filter for physical attacks, we evaluated this detector on AdvNet for various cross, box and diamond filters. The best performance was attained using a $3\times 3$ box filter.

\subsection{Data Collection \& Evaluation Metric}\label{sec:evalDet}
To evaluate the detection performance of $\text{VG}^*$ against the baseline detectors $\text{MD}^*$ and $\text{ED}^*$, we have recorded videos of clean and physically attacked stop, yield, and speed limit 35 signs. The traffic signs have been attacked in the same way as in AdvNet. The videos have an average duration of $12$ seconds and they were taken during daytime by a 12MP smartphone camera at $30$ frames-per-second. Each video contains a single (clean or attacked) traffic sign. At the beginning of the videos, the distance between the camera and the traffic sign is on average $15$m which gradually decreases. Snapshots from a representative video are provided in Fig. \ref{fig:snapshotsVideo}. 
To compare the detectors, we extract the frames from each video (to construct the time-series data $\ccalX_{0:t}$) and we report the accuracy of each detector defined as follows:
\begin{equation}\label{eq:acc1}
    \text{Acc}(T)=\frac{\text{number of correctly labeled images $x_t$}}{\text{total number of images}},
\end{equation}
where the images $x_t$ are labeled as `clean' or `adversarial' using the past $T$ frames in the video.
\vspace{-0.1cm}
\subsection{Comparative Evaluation on LISA CNN}\label{sec:evalDet2}
\vspace{-0.05cm}
We evaluate the performance of the time-series detectors  $\text{VG}^*$, $\text{MD}^*$, and $\text{ED}^*$ on the recorded videos mentioned in Section \ref{sec:evalDet}. 
We use these time-series detectors to classify each frame $x_t$ as clean or adversarial. Note that there can be frames without the traffic sign. Thus,  the detector uses the past $T$ frames that contained the sign. To reason about whether a traffic sign exists within a frame or not, we train a YOLO v3 object detector - using clean images from AdvNet - that can detect stop, yield and speed limit 35 signs. All detectors have been evaluated on the same videos. The results are summarized in Tables \ref{tab:VGcleanLISA}-\ref{tab:EntrAdvLISA} for LISA CNN. 
Particularly, we report the accuracy of classifying correctly, as adversarial or clean, the frames on each video for $T\in\{0,3,10,T_{\text{all}}\}$ as per \eqref{eq:acc1}.  Notice that when $T=0$, the time-series detector coincides with the corresponding single-image detector. The accuracy reported on $T=0$ can also be viewed as an approximation of the accuracy, denoted by $\hat{p}$ in Proposition \ref{prop:betterPerf}, of the single-image detector computed using the frames of the corresponding video.
\textit{Also, by $T=T_{\text{all}}$, we refer to the case where $T$ is large enough so that the time-series detector considers all past frames in $\ccalX_{0:t-1}$ of the video to reason about $x_t$, for all $t\geq 0$.} 

\begin{figure}[t]
  \centering
    \includegraphics[width=0.6\linewidth]{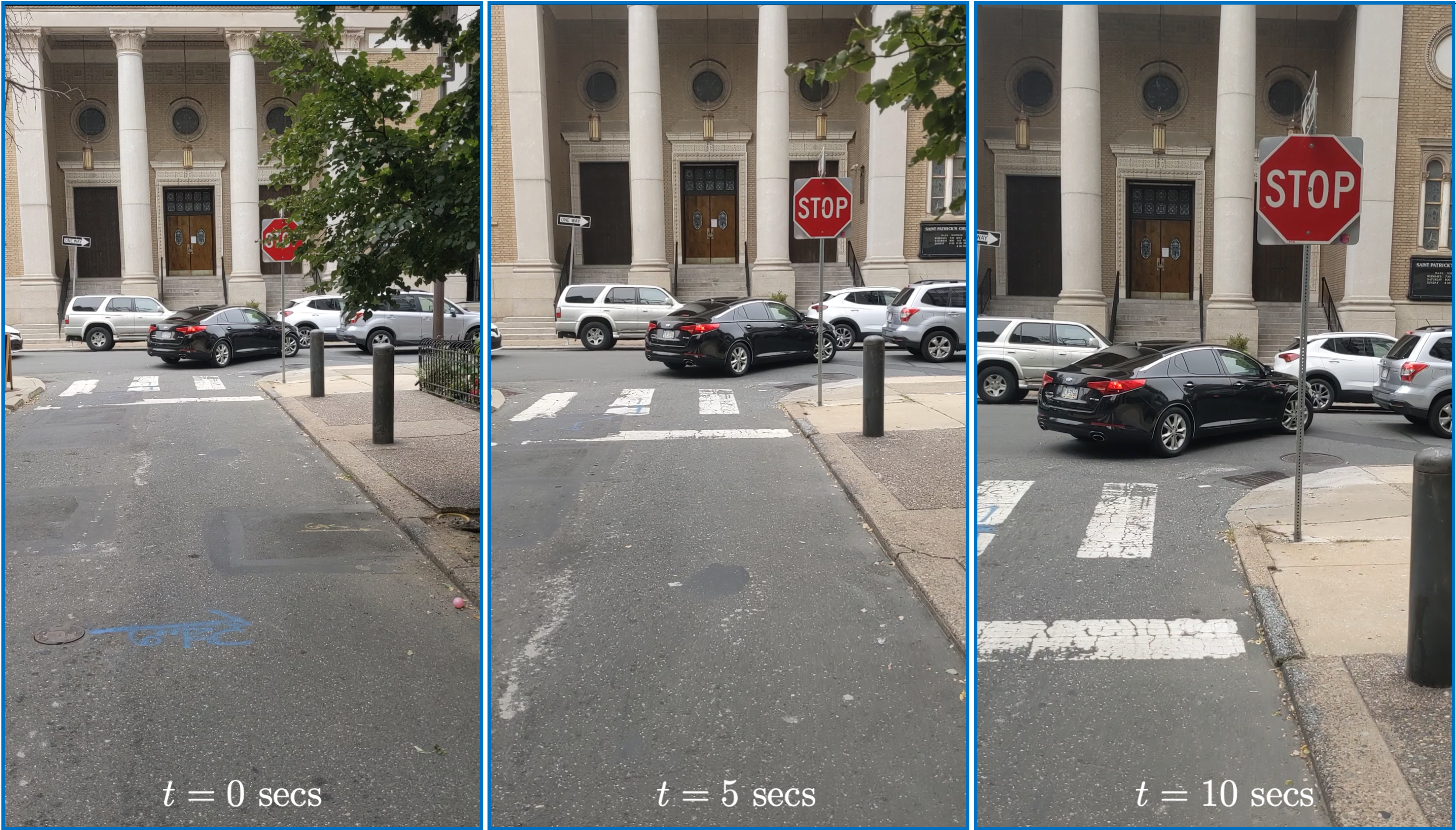}\vspace{-0.2cm}
  \caption{Snapshots of a recorded video containing a clean stop sign.}
\label{fig:snapshotsVideo}\vspace{-0.5cm}
\end{figure}

\textbf{$\text{VG}^*$}: In Tables \ref{tab:VGcleanLISA}-\ref{tab:VGadvLISA}, we show the average detection performance of $\text{VG}^*$ across five videos of clean traffic signs considering the LISA CNN.  
We observed that, in general, if the accuracy of the single-image detector $\hat{f}_d$ on each frame is reasonably high (implying that $\hat{p}>0.5$ as estimated by the $T=0$ columns), then when $T$ is very large, the time-series detector $f_d$ achieves high performance on distinguishing clean (Table \ref{tab:VGcleanLISA}) and adversarial (Table \ref{tab:VGadvLISA}) traffic signs. This is also implied by Proposition \ref{prop:betterPerf}. 

\begin{table}[t]
\caption{$\text{VG}^*$: LISA CNN And \textit{CLEAN} Traffic Signs}\vspace{-0.3cm}
\label{tab:VGcleanLISA}
\begin{tabular}{|l|l|l|l|l|l|l|}
\hline
        & T=0    & T=3  & T=10 & T=30 & $T_{\text{all}}$ & \begin{tabular}[c]{@{}l@{}}DNN\\ Acc\end{tabular} \\ \hline
Stop    & 87.4\% & 89\%   & 88.6\% & 90.4\% & 93\%   & 93.8\%                                            \\ \hline
Speed35 & 85.4\%   & 87.2\% & 87.4\% & 95.8\% & 98.6\% & 93.8\%                                            \\ \hline
Yield   & 92.6\%   & 95\%   & 97\%   & 100\%  & 100\%  & 99.2\%                                              \\ \hline
\end{tabular}
\vspace{-0.5cm}
\end{table}

\begin{table}[t]
\caption{$\text{VG}^*$: LISA CNN And \textit{ADVERSARIAL} Traffic Signs}\vspace{-0.3cm}
\label{tab:VGadvLISA}
\begin{tabular}{|l|l|l|l|l|l|l|}
\hline
        & T=0    & T=3  & T=10 & T=30 & $T_{\text{all}}$ & \begin{tabular}[c]{@{}l@{}}DNN\\ Acc\end{tabular} \\ \hline
Stop    & 97.3\%        & 98.8\%  & 99.8\% & 99.8\% & 99.8\% & 1.6\%                                            \\ \hline
Speed35  & 95.8\%        & 96.6\%  & 97.2\% & 97.8\% & 97.8\% & 0.5\%                                                                            \\ \hline
Yield   & 86.3\%        & 88.5\%  & 89.8\% & 90.2\% & 83.3\% & 22.3\%                                                                             \\ \hline
\end{tabular}
\vspace{-0.5cm}
\end{table}

\textbf{$\text{MD}^*$}: The average performance of $\text{MD}^*$ across five videos for LISA CNN is summarized in Tables \ref{tab:MahCleanLISA}-\ref{tab:MahAdvLISA}. Observe that this detector performs quite well on distinguishing clean and adversarial stop and speed limit signs but it performs poorly on yield signs. A potential reason for this is that MD assumes that class-conditions distributions are Gaussian distributions across all layers of the target neural network. This assumption was validated for digitally attacked and OOD data in \cite{lee2018simple}. Nevertheless, it may not hold for certain physical attacks, as the one considered here. 
Notice that as $T$ increases, the performance of $\text{MD}^*$ does not increase for the yield sign. However, this does not contradict Proposition \ref{prop:betterPerf} as the reported accuracy may be inaccurate (as only a few tens of image frames are used). Recall also that Proposition \ref{prop:betterPerf} does not imply that the detection performance should monotonically increase as $T$ increases. Overall, using the whole image sequence/video, the time-series detector classifies the yield sign (on average across all videos) as clean.
%
In total, the detection performance of $\text{MD}^*$ and $\text{VG}^*$ is comparable on stop signs but $\text{VG}^*$ outperforms $\text{MD}^*$ on speed limit/yield signs.  

\textbf{$\text{ED}^*$}: The average performance of $\text{ED}^*$ across five videos for LISA CNN  is summarized in Tables \ref{tab:EntrCleanLISA}-\ref{tab:EntrAdvLISA}. Similar to $\text{MD}^*$, observe that the single-image detector ($T=0$) performs much worse than VG in several videos. The reason is because ED is built using filters aiming to remove digital noise as opposed to e.g., VG that uses brightness transformations to deal with adversarial stickers. As a result, the performance of this detector is satisfactory on clean traffic signs (see Table \ref{tab:EntrCleanLISA}) but not on physically attacked traffic signs (see Table \ref{tab:EntrAdvLISA}). Overall, this detector seems to performs quite well on distinguishing clean and adversarial stop signs but it fails to generalize to speed limit and yield signs. 
%

\begin{table}[t]
\caption{$\text{MD}^*$: LISA CNN And \textit{CLEAN} Traffic Signs}\vspace{-0.3cm}
\label{tab:MahCleanLISA}
\begin{tabular}{|l|l|l|l|l|l|l|}
\hline
        & T=0    & T=3  & T=10 & T=30 & $T_{\text{all}}$ & \begin{tabular}[c]{@{}l@{}}DNN\\ Acc\end{tabular} \\ \hline
Stop    & 97.4\%     & 99.4\%       & 99.6\%     & 99.8\%     & 99.8\%       & 93.8\%                                                             \\ \hline
Speed35 & 72\%     & 80.4\%     & 84.2\%     & 93.8\%     & 99.2\%     & 93.8\%                                                                    \\ \hline
Yield    & 61.8\%     & 82.4\%       & 51.8\%       & 54.6\%      & 60\%      & 99.2                                                                \\ \hline
\end{tabular}
\vspace{-0.5cm}
\end{table}

\begin{table}[t]
\caption{$\text{MD}^*$: LISA CNN And \textit{ADVERSARIAL} Traffic Signs}\vspace{-0.3cm}
\label{tab:MahAdvLISA}
\begin{tabular}{|l|l|l|l|l|l|l|}
\hline
        & T=0    & T=3  & T=10 & T=30 & $T_{\text{all}}$ & \begin{tabular}[c]{@{}l@{}}DNN\\ Acc\end{tabular} \\ \hline
Stop    & 83.7\%        & 89\%  & 95.2\% & 95.2\% & 98.3\% & 1.6\%                                                                                              \\ \hline
Speed35 & 82.2\%        & 89.2\%  & 95.8\% & 98.4\% & 99.2\% & 0.5\%                                                                                                        \\ \hline
Yield    & 34\%        & 33\%  & 33.4\% & 33.3\% & 33.3\% & 22.3\%                                                                                                     \\ \hline
\end{tabular}
\vspace{-0.5cm}
\end{table}

\begin{table}[t]
\caption{$\text{ED}^*$: LISA CNN And \textit{CLEAN} Traffic Signs}\vspace{-0.3cm}
\label{tab:EntrCleanLISA}
\begin{tabular}{|l|l|l|l|l|l|l|}
\hline
        & T=0    & T=3  & T=10 & T=30 & $T_{\text{all}}$ & \begin{tabular}[c]{@{}l@{}}DNN\\ Acc\end{tabular} \\ \hline
Stop    & 86.8\%        & 90.8\%  & 91.2\% & 93.8\% & 93.8\% & 93.8\%                       \\ \hline
Speed35 & 48.2\%        & 47\%    & 44.2\% & 44.6\% & 40.4\% & 93.8\%                       \\ \hline
Yield   & 95.8\%        & 98.6\%  & 99.2\% & 99.6\% & 99.6\% & 99.2\%                        \\ \hline
\end{tabular}
\vspace{-0.5cm}
\end{table}

\begin{table}[t]
\caption{$\text{ED}^*$: LISA CNN And \textit{ADVERSARIAL} Traffic Signs}
\vspace{-0.3cm}
\label{tab:EntrAdvLISA}
\begin{tabular}{|l|l|l|l|l|l|l|}
\hline
        & T=0    & T=3  & T=10 & T=30 & $T_{\text{all}}$ & \begin{tabular}[c]{@{}l@{}}DNN\\ Acc\end{tabular} \\ \hline
Stop    & 92.7\%        & 94.7\%  & 97.5\% & 99.7\% & 100\%  & 1.6\%                                                    \\ \hline
Speed35 & 34\%          & 31\%    & 33\%   & 26.6\% & 11.2\% & 0.5\%                                                          \\ \hline
Yield   & 68.5\%        & 70\%    & 73.8   & 72.7   & 77.2   & 22.3\%                                                          \\ \hline
\end{tabular}
\vspace{-0.5cm}
\end{table}

\textbf{Summary of comparative detection performance:} The comparative performance of the resulting time-series detector for the LISA CNN is summarized in Figure \ref{fig:LISAsum}. Specifically, Figure \ref{fig:cleanL} and \ref{fig:advL} illustrate the average performance of the three time-series detectors across all clean and adversarial traffic signs, respectively, for $T\in\{0,3,10,T_{\text{all}}\}$.  Observe that the VG-based time-series detector achieves the best performance on average, since it has been designed to be effective against physical attacks as opposed to \cite{lee2018simple,liang2018detecting} that focus on digital attacks and/or OOD data. Also, notice that among all considered single-image detectors (i.e., $T=0$), VG attains the best performance in terms of both the accuracy metric defined in \eqref{eq:acc1} and the AUROC; see Section \ref{sec:setupSingle}. 
%


\begin{figure}[t]
\centering
  \subfigure{
    \label{fig:imgNet1}
    \includegraphics[width=0.3\linewidth]{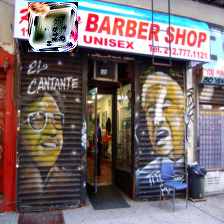}}
     \subfigure{
    \label{fig:imgNet2}
    \includegraphics[width=0.31\linewidth]{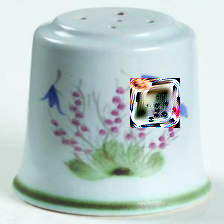}}\vspace{-0.4cm}
  \caption{Examples of ImageNet images attacked by \cite{brown2017adversarial} for the target label $y^*=\text{toaster}$.}\vspace{-0.4cm}
\label{fig:AdvPatchImageNet}
\end{figure}

\begin{figure}[t]
  \centering
  \subfigure[]{
    \label{fig:cleanL}
    \includegraphics[width=0.46\linewidth]{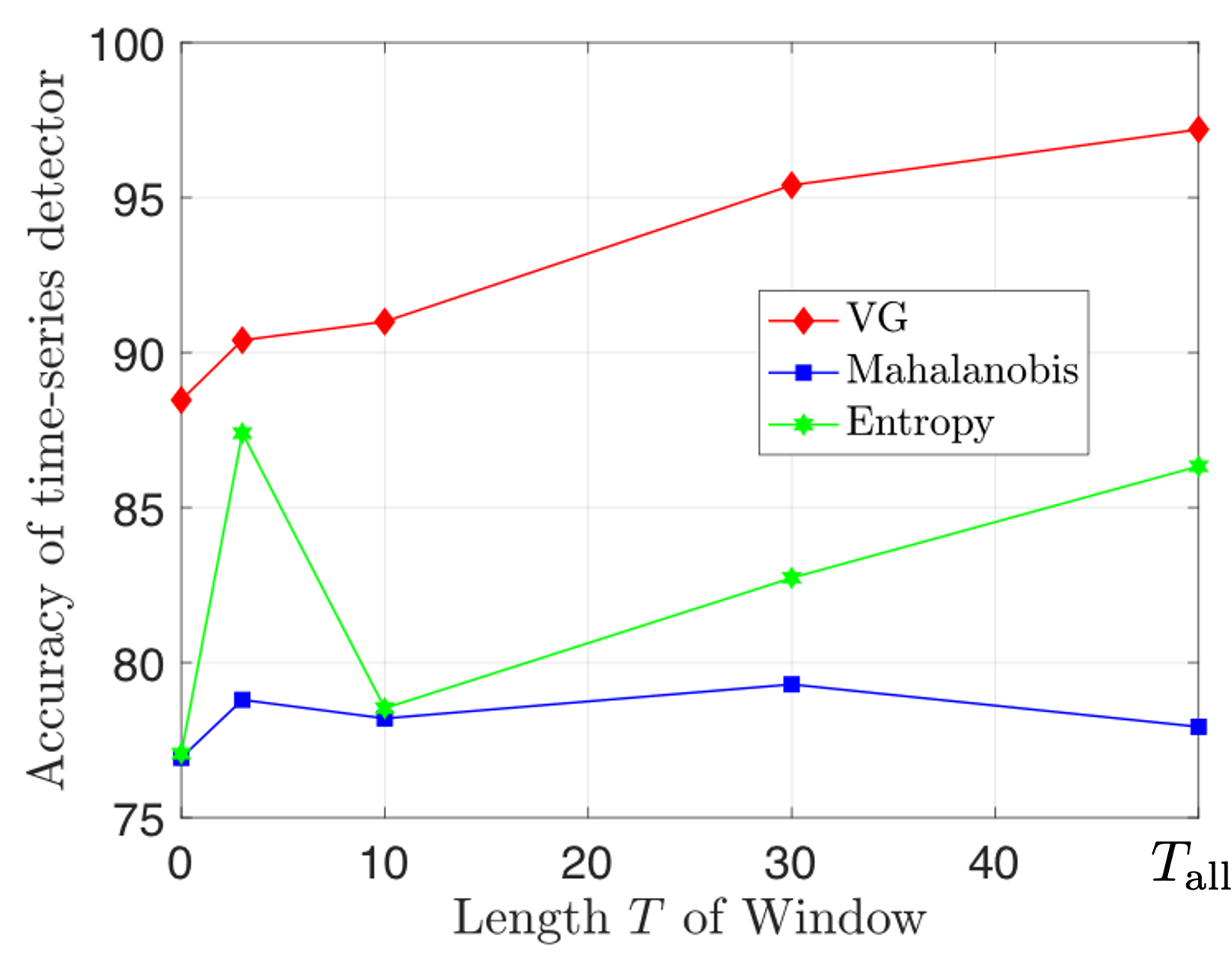}}
     \subfigure[]{
    \label{fig:advL}
    \includegraphics[width=0.46\linewidth]{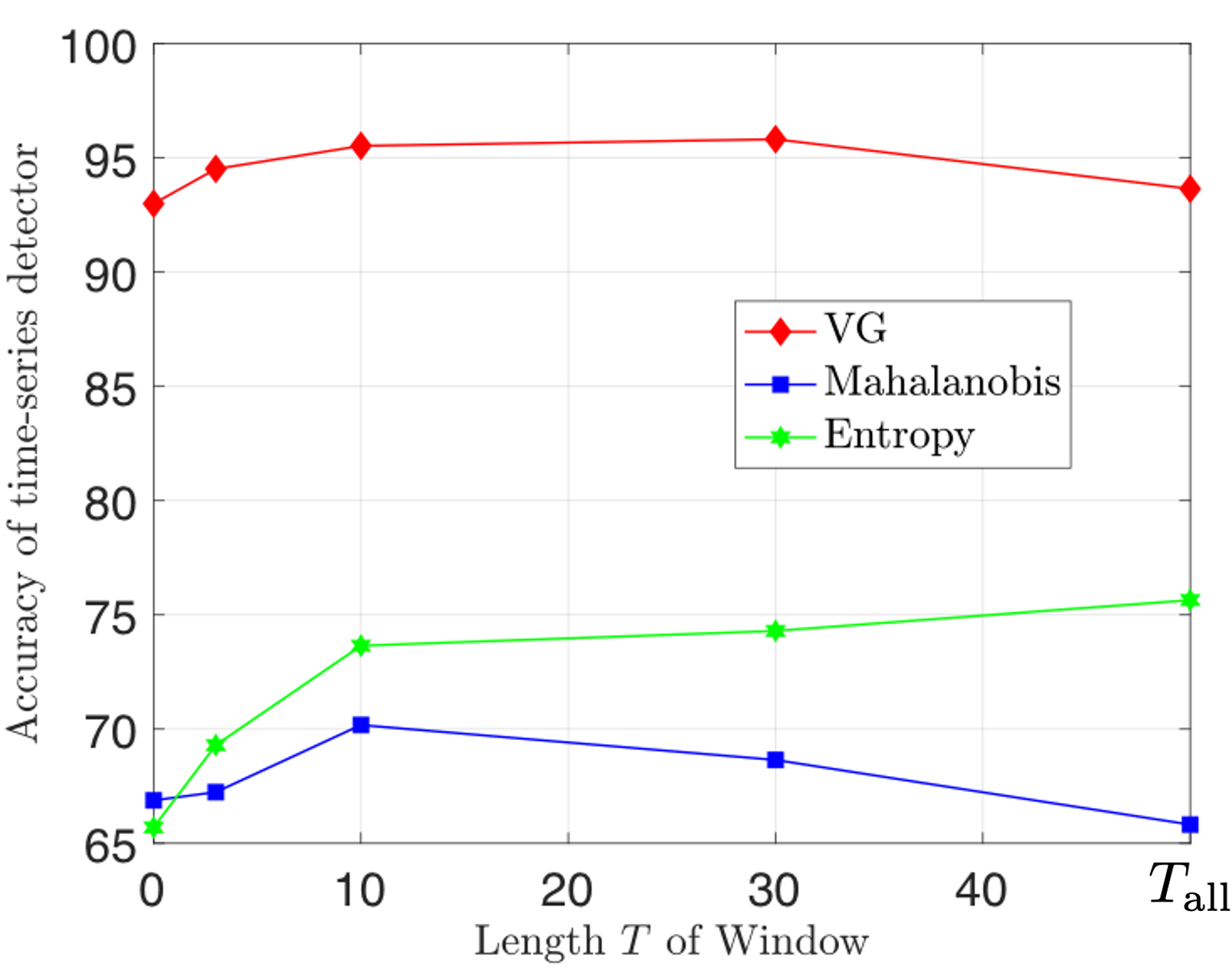}}\vspace{-0.3cm}
  \caption{
  Comparison of the constructed time-series detectors against clean (Fig. \ref{fig:cleanL}) and adversarial traffic signs (Fig. \ref{fig:advL}) for the LISA CNN.
  }\vspace{-0.2cm}
  \label{fig:LISAsum}
\end{figure}

\begin{table}[t]
\caption{$\text{VG}^*$: GTSRB CNN - (clean \& adversarial) stop signs}\vspace{-0.3cm}
\label{tab:VGgtsrb}
\begin{tabular}{|l|l|l|l|l|l|l|}
\hline
        & T=0    & T=3  & T=10 & T=30 & $T_{\text{all}}$ & \begin{tabular}[c]{@{}l@{}}DNN\\ Acc\end{tabular} \\ \hline
Clean Stop     & 85.8\%        & 89.8\%  & 91.4\% & 93\%   & 93\%   & 82.8\%                                           \\ \hline
Adv Stop & 77.2\%        & 81\%    & 87.7\% & 88.7\% & 97.2\% & 13.9\%                                         \\ \hline
\end{tabular}
\vspace{-0.5cm}
\end{table}

\begin{table}[t]
\caption{$\text{MD}^*$: GTSRB CNN - (clean \& adversarial) stop signs}\vspace{-0.3cm}
\label{tab:MahGTSRB}
\begin{tabular}{|l|l|l|l|l|l|l|}
\hline
        & T=0    & T=3  & T=10 & T=30 & $T_{\text{all}}$ & \begin{tabular}[c]{@{}l@{}}DNN\\ Acc\end{tabular} \\ \hline
Clean Stop      & 73.8\%        & 78\%    & 76.6\% & 82.2\% & 83.6\% & 82.8\%                                            \\ \hline
Adv. Stop  & 48.5\%          & 47.6\%  & 49\% & 50\% & 50\% & 13.9\%                                       \\ \hline
\end{tabular}
\vspace{-0.5cm}
\end{table}

\begin{table}[t]
\caption{$\text{ED}^*$: GTSRB CNN - (clean \& adversarial) stop signs}\vspace{-0.3cm}
\label{tab:EntrGTSRB}
\begin{tabular}{|l|l|l|l|l|l|l|}
\hline
        & T=0    & T=3  & T=10 & T=30 & $T_{\text{all}}$ & \begin{tabular}[c]{@{}l@{}}DNN\\ Acc\end{tabular} \\ \hline
Clean Stop     & 91.2\%        & 94\%    & 94.6\% & 97.4\% & 99.6\% & 82.8\%                                           \\ \hline
Adv. Stop  & 86\%          & 89.8\%  & 87.7\% & 93.8\% & 98.3\% & 13.9\%                                     \\ \hline
\end{tabular}
\vspace{-0.3cm}
\end{table}

\vspace{-0.1cm}
\subsection{Comparative Evaluation on GTSRB CNN}\label{sec:evalDet3}
\vspace{-0.1cm}
Next, we compare the performance of $\text{VG}^*$ against the time-series detectors $\text{MD}^*$ and $\text{ED}^*$, when the GTSRB-CNN is considered. 
The GTSRB-CNN is trained on the German Traffic Sign Recognition Benchmark. Since we did not have access to German traffic signs for our physical experiments, GTSRB-CNN is evaluated only on videos with US stop signs. The AUROC of VG (with brightness transformation and $\text{HSV}=200$) and MD for GTSRB-CNN on the AdvNet dataset is $97.81\%$ and and $79.9\%$, respectively. To set up the time series detectors, based on these ROCs, we select thresholds for VG and MD that yield $\text{TP rate}=95\%$, $\text{FP rate}=4\%$ and $\text{TP rate}=50\%$, $\text{FP rate}=8\%$. $\text{ED}^*$ is set up using a $3\times 3$ box filter (exactly as in Section \ref{sec:setupSingle}.  The comparative performance of the time-series detectors $\text{VG}^*$, $\text{MD}^*$, and $\text{ED}^*$ when the GTSRB-CNN is employed is summarized in Tables \ref{tab:VGgtsrb}-\ref{tab:EntrGTSRB}.
As in Section \ref{sec:evalDet2}, these tables report the accuracy, as defined in \ref{eq:acc1}, of the considered time-series detectors for videos containing a clean or adversarial stop sign.  The stickers were placed so that the GTSRB-CNN classifies stop signs as speed limit 35 signs as in \cite{kantaros2021real,eykholt2018robust}. Figure \ref{fig:GTSRBsum} visualizes the average performance of $\text{VG}^*$, $\text{MD}^*$, and $\text{ED}^*$  across all clean and adversarial videos reported in Tables \ref{tab:VGgtsrb}-\ref{tab:EntrGTSRB}. Observe first that $\text{ED}^*$ achieves the best (average) performance in distinguishing clean and adversarial stop signs when the GTSRB-CNN is considered. In fact, $\text{ED}^*$ achieved a satisfactory performance on stop signs when the LISA-CNN was considered but it failed to generalize to other traffic signs (see Section \ref{sec:evalDet2}). $\text{VG}^*$ performs comparably to $\text{ED}^*$ for the GTSRB-CNN but it outperforms it when the LISA CNN is considered. $\text{MD}^*$ achieved the worst performance in the GTSRB-CNN setup. Overall, we observe that $\text{VG}^*$  achieves a more consistent and satisfactory detection performance as the DNN model and the dataset change.
\begin{figure}[!h]
  \centering
  \subfigure[]{
    \label{fig:cleanG}
    \includegraphics[width=0.45\linewidth]{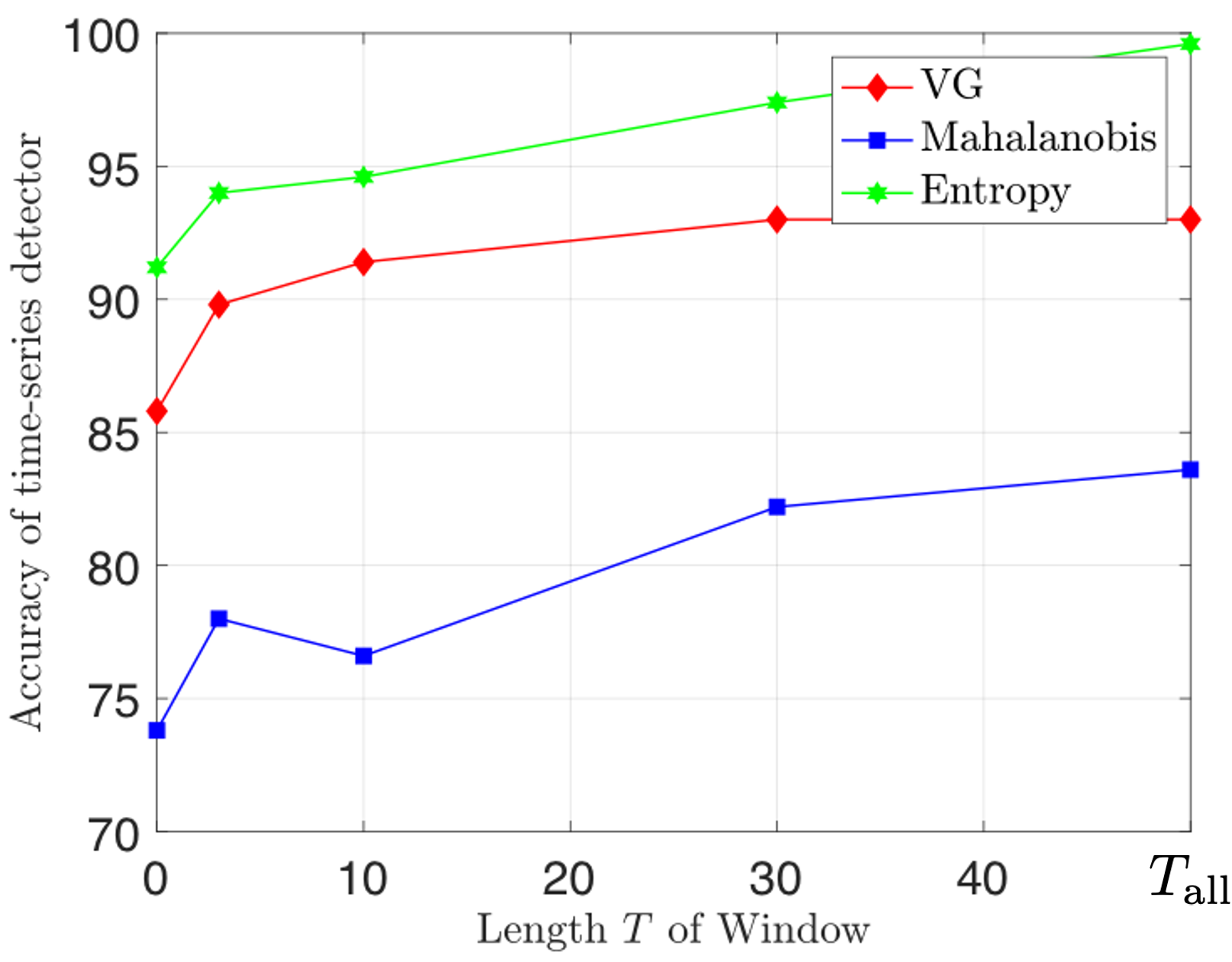}}
     \subfigure[]{
    \label{fig:advG}
    \includegraphics[width=0.45\linewidth]{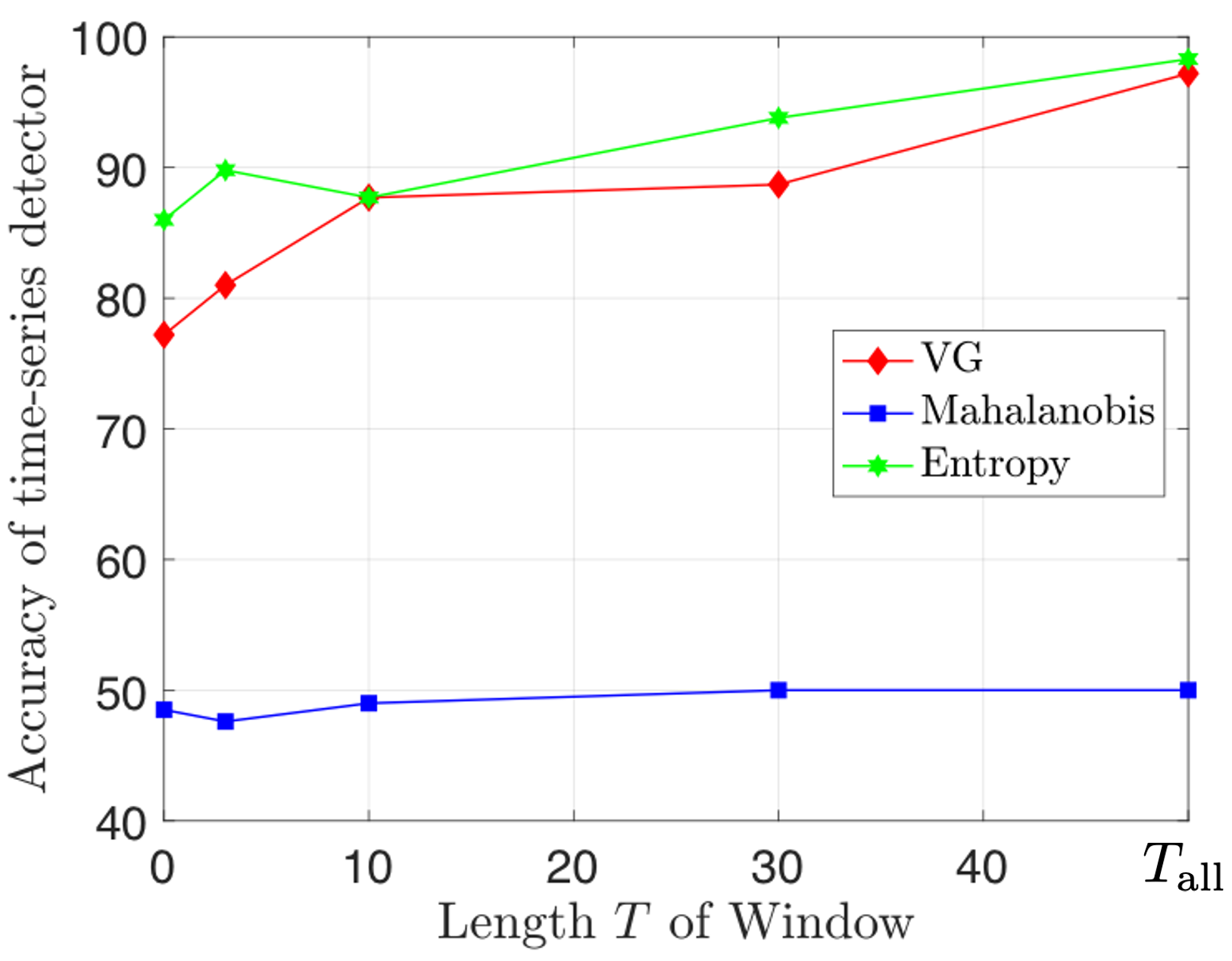}}
    \vspace{-0.2cm}
  \caption{
  Comparison of the constructed time-series detectors against clean (Fig. \ref{fig:cleanG}) and adversarial stop signs (Fig. \ref{fig:advG}) for the GTSRB CNN.
  }
  \vspace{-0.5cm}
  \label{fig:GTSRBsum}
\end{figure}

\subsection{Real-Time Computational Requirements}\label{sec:realtime}
In this section, we evaluate the real-time computational requirements of the three single-image detectors employed above and the resulting time-series detector in terms of required disk-space and execution time. MD requires storing the empirical class mean and covariance for each layer of the DNN that are used at runtime to detect adversarial inputs. Storing this information required $4.4$ MB.
%
In contrast, VG and ED do not have any disk-space requirements, as they do not rely on storing information from the training sets. Instead, they apply image transformations and filtering methods in memory. Also, VG, MD and ED require $0.018$, $0.023$ and $0.049$ secs, on average, to check if a single image is adversarial. Note that these runtimes are implementation specific and they may depend on the image dimensions. For instance, VG can be optimized by computing the softmax output of the original and the transformed image in parallel. 
The employed single-image detectors are fast enough for real time applications, such as autonomous cars. For instance, the YOLO neural network \cite{redmon2016you} typically operates in autonomous driving applications at 55 frames per second (FPS) (or at 155 FPS for a small accuracy trade off) i.e., $55$ images are generated per second \cite{yurtsever2020survey}. Thus, if the car camera operates at $55$ FPS, then a new frame/image is generated every $0.018$ seconds. In other words, VG can be used to reason about trustworthiness of each frame while the entropy detector can be used every $0.049/0.018=2.78$ frames. 
Finally, applying majority vote is quite inexpensive. For instance, majority vote required on average $7.1\times 10^{-6}$ \text{secs}, $8.1\times 10^{-6}$ \text{secs}, $7.9\times 10^{-6}$ \text{secs}, and $1.1\times 10^{-5}$ \text{secs} for $T=3, 10, 30$ and $T=T_{\text{all}}$, respectively. We note that these runtimes ignore any computational burden incurred by e.g., object tracking algorithms that will generate sequences of images containing the same object of interest.

\vspace{-0.2cm}
\section{Conclusion}
\vspace{-0.1cm}
This paper proposed an effective and modular defense against  physical adversarial attacks in time series image data. 
We provided conditions under which by using historical data, the time series detector performs better than the corresponding single image detector. Finally, we provided extensive experiments to validate the proposed time-series detector.

\vspace{-0.2cm}
\appendix
\section{Adversarial Patch: Comparative Experiments of Single-Image Detectors}\label{appB}
\vspace{-0.1cm}
In what follows, we compare the performance of the detectors VG, MD, and ED against an adversarial patch \cite{brown2017adversarial}.

\textbf{Adversarial Patch:} A targeted adversarial patch is presented in \cite{brown2017adversarial}. Particularly, to fool a DNN classifier, a part of the image is completely replaced with an adversarial patch. This patch is generated as follows. 
Given  an image $x$, a patch $p$ that may have any shape, patch location $l$, and patch transformations $t$ (e.g. rotations or scaling) a patch application operator $A(p,x,l,t)$ is defined which first applies the transformations $t$ to the patch $p$, and then applies the transformed patch to the image $x$ at location $l$. The trained patch $\hat{p}$ is trained to optimize the objective function: $\hat{p}=\mathbf{E}_{x\sim\mathcal{X},t\sim T, l\sim L}\log P(f(A(p,x,l,t))=y^*)$,
where $\mathcal{X}$ is a training set of images, $T$ is a distribution over transformations of the patch, $L$ is a distribution over locations in the image, and $P(f(A(p,x,l,t))=\hat{y})$ denotes the probability that the classifier will assign to the adversarially patched image $A(p,x,l,t)$ the target label $y^*$. Note that the expectation is over images as well, which encourages the trained patch to work regardless of what is in the background. The trained patch $\hat{p}$ is obtained in \cite{brown2017adversarial} by using a  variant of the Expectation over Transformation (EOT) framework presented in \cite{athalye2018synthesizing}. 
As shown in \cite{brown2017adversarial}, this attack is effective by either  physically placing the sticker in the vicinity of an object of interest or digitally embedding the sticker in the background of given images  (see Fig. \ref{fig:AdvPatchImageNet}).



\textbf{Dataset:} All detectors are evaluated on the ImageNet (ILSVRC2012) dataset. ImageNet contains $1.4$ million RGB $224\times 224$ images divided in $1.2$ million training images, $50,000$ validation images, $150,000$ testing images, with $1000$ classes. The adversarial patch is trained as in \cite{brown2017adversarial} to fool ResNet 50 with desired label $y^*=\text{toaster}$. Once the patch is generated, we insert it on a random location on a random ImageNet image. Some examples of adversarially attacked ImageNet images are shown in Figure \ref{fig:AdvPatchImageNet}. In our evaluation, we have used the ResNet-50 with $73.6\%$ accuracy on the ImageNet dataset \cite{he2016deep}.

\textbf{Evaluation:} To compare the performance of the detectors, we compute their corresponding ROC. VG equipped with brightness with HSV values equal to 90 and 200 yields AUROC being equal to $90.2\%$ and $93.5\%$, respectively; see Fig. \ref{fig:VG160_patch}. For example, when $HSV=160$ a threshold $\tau=3.6$ yields a TP rate equal to $90\%$ and a FP rate equal to $14.8\%$. 
The AUROC of MD  is $84.5\%$ which is lower than that of VG; see Fig. \ref{fig:MDROC_patch}. In fact, by comparing the corresponding ROC curves, we obsere that for a given FP rate, VG achieves a higher TP rate. For example, a threshold $\tau=-0.94$ yields a TP rate equal to $80\%$ and a FP rate equal to $24.9\%$. 
As in Section \ref{sec:evalDet2}, to configure ED we investigate various spatial filters. The best performance was attained using a $9\times 9$ box mean filter. In this case, ED yields  TP and FP rates that are $97\%$ and $32.5\%$, respectively. Notice that although the TP rate is quite high, the corresponding FP rate may be quite high for certain safety critical applications (e.g., autonomous driving). Overall, VG seems to have a more consistent and `robust' performance across both physical attacks considered in the paper and various models and datasets than MD and ED. We note that VG with brightness transformation and $\text{HSV}=200$ attains a high AUROC under the adversarial patch for various DNN models. For instance, for the DNN models inception v3,  vgg16,  vgg19 the AUROC is  $84.92\%$, $96.35\%$, and $95.93\%$.

\begin{figure}[t]
  \centering
  \subfigure[]{
    \label{fig:VG160_patch}
    \includegraphics[width=0.48\linewidth]{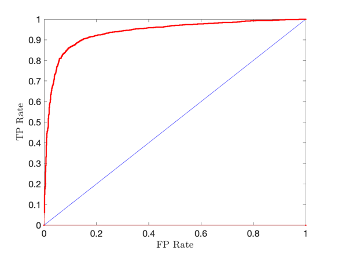}}
     \subfigure[]{
    \label{fig:MDROC_patch}
    \includegraphics[width=0.45\linewidth]{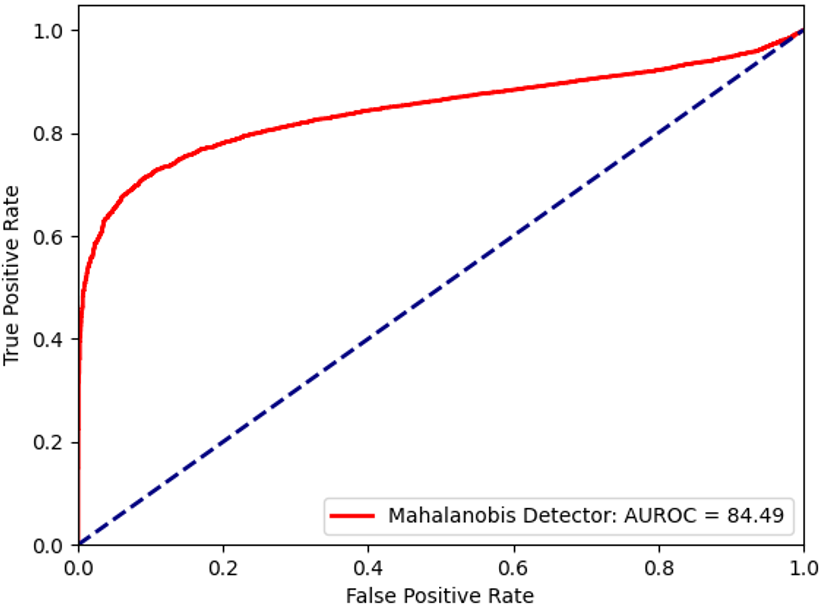}}\vspace{-0.3cm}
  \caption{
 ROC curve (red) corresponding to VG with brightness trasformation and $\text{HSV}=200$ (Fig. \ref{fig:VG160_patch}) and MD (Fig. \ref{fig:MDROC_patch}) on ImageNet and ResNet50 under the adversarial patch \cite{brown2017adversarial}.
  }
  \vspace{-0.6cm}
\end{figure}

\bibliography{YK_bib}
\bibliographystyle{IEEEtran}
\end{document}